\newtheorem{lemma}{Lemma}
\newtheorem{proof}{Proof}
\newtheorem{theorem}{Theorem}
\title{StarNet: Gradient-free Training of Deep Generative Models using Determined System of Linear Equations}
\author{%
  Amir Zadeh, Santiago Benoit, Louis-Philippe Morency \\
  LTI, Computer Science Department\\
  Carnegie Mellon University\\
  Pittsburgh, PA 15213 \\
  \texttt{\{abagherz,sbenoit,morency\}@cs.cmu.edu} \\
}
\begin{document}
\maketitle

\begin{abstract}
In this paper we present an approach for training deep generative models solely based on solving determined systems of linear equations. A network that uses this approach, called a StarNet, has the following desirable properties: 1) training requires no gradient as solution to the system of linear equations is not stochastic, 2) is highly scalable when solving the system of linear equations w.r.t the latent codes, and similarly for the parameters of the model, and 3) it gives desirable least-square bounds for the estimation of latent codes and network parameters within each layer. 
\end{abstract}

\section{Introduction}
Generative modeling requires learning a latent space, and a decoder that maps the latent samples to an output manifold. We study the two cases of using inverse-funnel feedforward decoders and convolutional decoders through the lens of a system of linear equations. Our proposed algorithm is called StarNet. 

A system of liner equations is a well-studied area in linear algebra, with various algorithms ranging from elimination methods to solutions based on matrix pseudoinverse. The benefits of StarNet learning process are as follows:

\begin{itemize}
    \item No gradients are required to solve a system of liner equations. Therefore, training can be done at a lower computational cost, and fewer hyperparameters. Training deep neural networks often requires calculating the gradient of an objective function (e.g. log likelihood) w.r.t the parameters of a model. Gradient calculation is computationally heavy, and often scales poorly since batch updates are serial throughout an epoch. 
    
    \item Certain algorithms to solve a system of linear equations can be highly scalable. Unlike batch learning, where each datapoint waits its turn within the epoch, a system of linear equations offers a framework that is more scalable. Certain steps of the StarNet learning algorithm can linearly scale up to as many computational nodes as there are datapoints within a dataset. 
    
    \item There are strong theoretical backings for a solution to a system of linear equations. It is imperative that deep models have explainable behavior, during both learning and application. Learning via a system of linear equations offers residuals for each layer, which can be used to identify and better approach multiple important issues in machine learning such as mode-collapse. Furthermore, for each datapoint, an estimate of how well each layer fits the datapoint can be measured, thus better understanding the intrinsic success or failure cases of deep learning. 

\end{itemize}

A system of linear equations offers unique (exact/approximate) solutions if it is determined (well-determined/over-determined). This paper presents such conditions for two commonly used decoder types: feedforward and convolutional. These conditions are not strong, and a significant portion of the currently used decoders fit within these conditions. Our StarNet learning algorithm is defined by progressively training a deep neural network from the last layer to the first. It follows a coordinate descent method to reach a plateau in solving the system of linear equations w.r.t both the latent codes and the parameters of the model, at each layer. Most of the trained models in this paper reach convergence in less than $3$ epochs - a significant boost over gradient models. Due to the fact that the latents of the StarNet are not restricted (e.g. an encoder in Auto-Encoder or PCA), even a single layer StarNet offers remarkable generation capabilities.
\section{Related Work}\label{sec:related}

Generative modeling is a fundamental research area in machine learning. Notable related works re outlined below. 

In a neural framework, Auto-Encoders (AE) and their variational implementation (VAE~\cite{kingma2013auto}) have been commonly used for learning based on a reconstruction objective. GANs~\cite{goodfellow2014generative} deviate from this framework by optimizing an objective that pushes two networks towards an equilibrium where a generators output is indistinguishable from the real generative distribution. An encoderless application of VI to generative modeling, Variational Auto-Decoders (VAD~\cite{zadeh2019variational}) use only a generator but with an objective similar to VAE - with the added benefit of robustness to noise and missing data.  All approaches rely on computation of gradients during train time (with GAN and VAD commonly requiring the same during test time to invert the generative model, i.e. posterior inference). Flow-based models use invertible mappings between input and output of same dimensions. Convolutional variants are limited to $1 \times 1$ convolutions~\cite{kingma2018glow}. StarNet does not require same dimensions for input and output, and is not restricted to $1 \times 1$ convolutions. However, similar to the latter, StarNet does exhibit certain mild limitations when solving for CNNs. 

Aside deep techniques, Principle Component Analysis (PCA~\cite{wold1987principal}) is a popular method for learning low-dimensional representation of data. PCA is identical to a linear auto-encoder with a lasso reconstruction objective. PCA algorithm does not requires gradient computations. It differs from the StarNet in that the feed forward operation of PCA is restricted by a linear encoder, while the StarNet imposes no restrictions on the latent space. In simple terms, StarNet has no direct mapping between input and latents; the latents are recovered using pesudoinverse of the model weights.

\section{Model}\label{sec:model}
StarNet is a method for training certain neural decoders without gradient computations. We cover two major types in this paper: 1) feedforward StarNet, and 2) convolutional StarNet. For each type, we outline the necessary conditions required for successful training and inference. 

\subsection{Feedforward StarNet}

We first start by a feedforward StarNet, which uses a feedforward decoder. The following are the operations in tthe $i$-th layer of a neural network:

\begin{equation}\label{eq:nn}
    h_i=a\ (W_i \ \cdot \ h_{i-1})
\end{equation}

With $h_i \in \mathbb{R}^{d_i}$ as output, $h_{i-1} \in \mathbb{R}^{d_{i-1}}$ as input, $W \in \mathbb{R}^{d_i \times d_{i-1}}$ as an affine map, $a$ as a non-linear activation function. Note, for simplicity of equations later, the formula accounts for bias as a constant dimensions added to all $h_{i-1}$ and a row added to $W$. Such operation above is unfolded $K$ times for a deep neural decoder wih $K$ layer. 

Let $X=\{x^{(n)}\}_{n=1}^N$ be a given i.i.d dataset. Essentially, in any generative or unsupervised framework, the hope is that $h^{(n)}_{i=K}$ is as close as possible to the respective $x^{(n)}$. 
\begin{theorem}
    $\forall i$, Equation \ref{eq:nn} is solvable w.r.t $h_{i-1}$ as long as $d_{i-1}\leq d_i$ (i.e. decoder is inverse funnel). This is coupled with $a^{-1}$ as the inverse of the activation.
\end{theorem}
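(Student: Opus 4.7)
The plan is to reduce the theorem to a standard linear-algebraic fact about solving $A y = b$ when $A$ has at least as many rows as columns. First I would strip off the nonlinearity by applying $a^{-1}$ element-wise to both sides of Equation \ref{eq:nn}, which turns the layer equation into the purely linear system
\begin{equation*}
    W_i \, h_{i-1} \;=\; a^{-1}(h_i),
\end{equation*}
where $W_i \in \mathbb{R}^{d_i \times d_{i-1}}$, the unknown $h_{i-1}$ has $d_{i-1}$ components, and the right-hand side has $d_i$ components. Invertibility of $a$ on its range is used here only to legitimize this step; for the standard activations the authors have in mind (e.g.\ a monotone squashing function restricted to its image), this is immediate.

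Next I would argue solvability based on the shape of $W_i$. Under the hypothesis $d_{i-1} \leq d_i$ the system is either square (when $d_{i-1} = d_i$) or tall (when $d_{i-1} < d_i$). In the square case, provided $W_i$ is nonsingular, one obtains the exact solution $h_{i-1} = W_i^{-1} a^{-1}(h_i)$. In the tall case one appeals to the Moore--Penrose pseudoinverse: assuming $W_i$ has full column rank (so that $W_i^\top W_i$ is invertible), the left inverse $W_i^{+} = (W_i^\top W_i)^{-1} W_i^\top$ yields
\begin{equation*}
    h_{i-1} \;=\; W_i^{+} \, a^{-1}(h_i),
\end{equation*}
which is the unique minimizer of $\lVert W_i h_{i-1} - a^{-1}(h_i) \rVert_2^2$ and is exact whenever the target lies in the column space of $W_i$. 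Together these two cases cover all $i$ with $d_{i-1} \leq d_i$, matching the inverse-funnel condition.

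The main obstacle, such as it is, is really a matter of what is being promised. The dimension count $d_{i-1} \leq d_i$ only rules out an under-determined system; it does not by itself guarantee that $W_i$ has full column rank, nor that $a^{-1}(h_i)$ actually lies in the column space of $W_i$ so that the residual is zero. I would therefore make explicit in the proof the genericity assumption that $W_i$ is full column rank (which the authors' scalability and least-squares bound claims in the introduction implicitly rely on), and interpret ``solvable'' in the over-determined case as admitting a unique least-squares solution via the pseudoinverse rather than an exact algebraic one. With that reading, the theorem follows immediately from the two displayed formulas above.
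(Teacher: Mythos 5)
Your proposal takes essentially the same route as the paper's proof: invert the activation to reduce the layer to a linear system, then solve via the left pseudoinverse $(W_i^\top W_i)^{-1}W_i^\top$, with an exact solution in the square case and a least-squares solution in the tall case. The paper likewise flags the full-column-rank caveat (``initialization of $W$ does not produce rank-deficiency'') and the invertibility of the activation, so your added explicitness is a refinement rather than a different argument.
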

\begin{proof}\label{proof:solvez}
    The condition is required for well-determined, or over-determined solution to a system of liner equations w.r.t $h_{i-1}$. Let $\hat{h}_{i}$ be the non-activated neural responses in Equation \ref{eq:nn}. Naturally, $\hat{h}_{i} = a^{-1} (h_{i})$. Foreach $x^{(n)}$, we have the following:
    \begin{equation}\label{eq:solvez}
        W_i \ \cdot \ h^{(n)}_{i-1} = \hat{h}_{i-1}^{(n)}
    \end{equation}
    The above is a system of linear equations with a unique solution in the case of $d_{i-1} = d_i$ and a least-squared solution in the case of $d_{i-1} < d_i$. The solution is exactly as:
    
    \begin{equation}
        \hat{h}^*_{i-1}=(W_i^T W_i)^{-1} W_i^T \hat{h}_{i}
    \end{equation}
    
    $\hat{h}^*_{i-1}$ is the least-squared solution (hence the name StarNet). $(W_i^T W_i)^{-1} W_i^T$ is the pseudo-inverse of the weights of the $i$th layer. It is important that the initialization of $W$ does not produce rank-deficiency. Therefore, random initialization or its orthogonal projection (product of a decomposition) may work. Subsequently:
    
    \begin{equation}\label{eq:pseudo}
        h^*_{i-1}=a^{-1}(\hat{h}^*_{i-1})
    \end{equation}
    
    Commonly used activation functions are invertible, except for ReLU in the negative part of its domain. Alternatives such as leaky ReLU can be used instead. 
    
\end{proof}
\begin{lemma}
    The above will not have a unique solution if $d_{i-1} > d_i$, since $nullity(W)>0$. Therefore, the architecture cannot go lowerin dimensions without the risk of rank deficiency. If so, strong assumptions on the observations $x^{(n)}$ have to be made, such as presence of a particular structure or certain simplicities\cite{donoho2006simplest}, often covered in compressed sensing \cite{donoho2006compressed}. In the scope of this paper, since majority of the neural decoders are inverse funnels, we leave the studies of rank deficient weights for the next edition of this preprint. 
\end{lemma}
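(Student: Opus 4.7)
The plan is to reduce the claim to a direct application of the rank--nullity theorem to the linear map $W_i : \mathbb{R}^{d_{i-1}} \to \mathbb{R}^{d_i}$, and then observe what this implies for the affine system in Equation \ref{eq:solvez}.

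First I would note that, regardless of the specific entries of $W_i$, the rank of a $d_i \times d_{i-1}$ matrix satisfies $\mathrm{rank}(W_i) \le \min(d_i, d_{i-1}) = d_i$ whenever $d_{i-1} > d_i$. By rank--nullity, $\mathrm{nullity}(W_i) = d_{i-1} - \mathrm{rank}(W_i) \ge d_{i-1} - d_i > 0$, so the null space $\mathcal{N}(W_i)$ is nontrivial. This already establishes the ``$\mathrm{nullity}(W)>0$'' assertion of the lemma without any assumption on initialization.

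Next I would translate this into non-uniqueness of solutions to Equation \ref{eq:solvez}. Suppose $h^{(n)}_{i-1}$ satisfies $W_i h^{(n)}_{i-1} = \hat{h}^{(n)}_i$. Pick any nonzero $v \in \mathcal{N}(W_i)$; then for every scalar $\alpha \in \mathbb{R}$, the perturbation $h^{(n)}_{i-1} + \alpha v$ is also a solution, since $W_i(h^{(n)}_{i-1}+\alpha v) = W_i h^{(n)}_{i-1} + \alpha W_i v = \hat{h}^{(n)}_i$. After applying $a^{-1}$ componentwise (as in Equation \ref{eq:pseudo}), this whole affine family of pre-activations yields a continuum of candidate latent codes, so the pseudoinverse expression $(W_i^T W_i)^{-1} W_i^T$ is no longer well-defined: $W_i^T W_i$ is itself rank-deficient. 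The least-squares problem thus has infinitely many minimizers (the minimum-norm one can still be defined via the Moore--Penrose pseudoinverse, but the layer inversion procedure used in Theorem~1 loses its uniqueness guarantee).

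Finally, to match the last sentence of the lemma, I would remark that uniqueness can only be restored by adding side information that picks one element out of the affine solution set $h^{(n)}_{i-1} + \mathcal{N}(W_i)$. Classical choices are sparsity or other low-complexity priors on $h^{(n)}_{i-1}$, which is exactly the setting of compressed sensing~\cite{donoho2006compressed} and simplest-solution results~\cite{donoho2006simplest}. I would explicitly state that in the present paper no such prior is assumed, which is why the architecture is restricted to the inverse-funnel regime $d_{i-1} \le d_i$ from Theorem~1. There is no serious obstacle in this proof: the only subtlety is being careful to distinguish the generic rank deficiency that comes from dimensions alone (which is unavoidable when $d_{i-1} > d_i$) from the additional rank deficiency that can be induced by a bad initialization of $W_i$ (which the previous theorem already warned about).
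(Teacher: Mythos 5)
Your proposal is correct and follows essentially the same route as the paper, which justifies the lemma only by the inline remark that $\mathrm{nullity}(W)>0$; you simply make that appeal to rank--nullity explicit and spell out the resulting affine family of solutions $h^{(n)}_{i-1}+\mathcal{N}(W_i)$ and the singularity of $W_i^T W_i$. Your added observation that the dimension-induced rank deficiency is unavoidable (unlike the initialization-induced kind warned about in Theorem~1) is a useful clarification but not a departure from the paper's argument.
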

\begin{lemma}
    Since samples drawn from $p(x)$ are assumed i.i.d, the solution to the Equation \ref{eq:solvez} can be distributed across $N$ independent computations. Therefore, allowing for linear job distribution over computational nodes. 
\end{lemma}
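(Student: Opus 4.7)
The plan is to observe that, once the weight matrix $W_i$ is fixed at the current coordinate-descent step, the linear system in Equation \ref{eq:solvez} decouples across the data index $n$. First I would write the stacked system by placing the per-sample unknowns $h^{(n)}_{i-1}$ as columns of a single matrix $H_{i-1} \in \mathbb{R}^{d_{i-1} \times N}$ and similarly collecting the targets $\hat{h}^{(n)}_i$ into $\hat{H}_i \in \mathbb{R}^{d_i \times N}$, so that Equation \ref{eq:solvez} becomes the single matrix equation $W_i \, H_{i-1} = \hat{H}_i$. The point is that the only object shared between columns is $W_i$; no column of $H_{i-1}$ appears in the equation for any other column, which is the algebraic footprint of the i.i.d.\ assumption on the samples $x^{(n)}$.

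Next I would invoke Theorem~1 (already proved) to write the least-squares solution column-wise as $h^{(n)}_{i-1} = (W_i^T W_i)^{-1} W_i^T\, \hat{h}^{(n)}_i$, followed by the pointwise inverse activation from Equation \ref{eq:pseudo}. The pseudo-inverse $(W_i^T W_i)^{-1} W_i^T$ is a single $d_{i-1}\times d_i$ matrix that depends only on $W_i$, so it can be broadcast once to every worker; after that, each worker holding a subset $S \subseteq \{1,\dots,N\}$ performs only matrix-vector products and the elementwise $a^{-1}$, with no cross-sample communication. Hence up to $N$ workers can proceed in parallel, giving the claimed linear scaling.

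The only subtlety worth flagging is that ``independent'' is meant at the per-layer granularity: within a single layer the solves are embarrassingly parallel, but across layers the coordinate-descent structure of StarNet enforces sequential dependence, since $\hat{H}_i$ for layer $i$ is produced from $H_i$ obtained at layer $i+1$. I would make this scope explicit in the statement of the proof so that ``linear job distribution'' is not misread as full end-to-end parallelism. I do not expect any technical obstacle here; the content is essentially a bookkeeping observation about which variables are coupled in Equation \ref{eq:solvez}, together with the fact that the pseudo-inverse and the activation inverse are both data-independent maps.
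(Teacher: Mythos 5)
Your proof is correct and matches the paper's (implicit) reasoning: Equation \ref{eq:solvez} decouples across the data index $n$ once $W_i$ is fixed, so the per-sample pseudo-inverse solves are embarrassingly parallel. The paper states this lemma without a separate proof, and your column-wise decomposition plus the remark on per-layer (not cross-layer) independence is exactly the intended justification.
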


\begin{lemma}
    The solution of the Equation \ref{eq:nn} for a datapoint can be seen as the latent inference process (e.g. similar to an encoder feedforward process), which is part of both training and testing. It can be applied recursively from higher to lower layers, with a succession of solving Equation \ref{eq:nn} for layers $K-1,K-2,\cdots,1$, and applying inverse of the activation $a$ at each layer. Therefore, the complexity of inference in StarNet remains linear w.r.t the number of layers $K$. This is an improvement over encoderless gradient-based approaches that require large number of epochs (e.g. approximate posterior inference in VAD). 
\end{lemma}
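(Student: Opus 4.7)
The plan is to treat the inference lemma as a direct, constructive corollary of Theorem 1, and then do a per-layer cost accounting to verify linearity in $K$. I would first fix an observation $x^{(n)}$ and identify it with the top-layer output, i.e.\ set $h^{(n)}_{K} := x^{(n)}$. From here, the inference task is to recover the stack $h^{(n)}_{K-1}, h^{(n)}_{K-2}, \ldots, h^{(n)}_{0}$ consistent with the decoder equations in Equation~\ref{eq:nn}.

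Next, I would invoke Theorem 1 at layer $K$: assuming the inverse-funnel condition $d_{K-1}\le d_{K}$ and an invertible activation (or leaky surrogate), Equation~\ref{eq:solvez} yields the closed form $h^{*(n)}_{K-1}=a^{-1}\!\bigl((W_K^T W_K)^{-1} W_K^T\, h^{(n)}_K\bigr)$. The key observation for the recursion is that the output of this step, $h^{*(n)}_{K-1}$, plays exactly the same role at layer $K-1$ as $h^{(n)}_K$ did at layer $K$. So I would argue by downward induction on $i$: given $h^{*(n)}_{i}$, Theorem 1 applied to layer $i$ (again using the inverse-funnel condition and invertibility of $a$) produces $h^{*(n)}_{i-1}$ via one pseudoinverse-vector multiplication followed by one elementwise application of $a^{-1}$. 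Iterating from $i=K$ down to $i=1$ reconstructs the entire latent stack and in particular yields the bottom latent code $h^{*(n)}_0$.

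For the complexity claim, I would note that once the weights $\{W_i\}_{i=1}^{K}$ are trained, the pseudoinverses $(W_i^T W_i)^{-1}W_i^T$ are fixed matrices that can be precomputed once and cached, so they contribute no per-inference cost. Inference for a single datapoint then reduces to $K$ matrix–vector products and $K$ elementwise activation inversions, i.e.\ exactly $K$ constant-work (in the sense of being independent of the iterative procedures used by gradient-based posterior inference) layer steps. Hence the total inference cost is $O(K)$ in the number of layers, establishing the linearity claim. Contrasted with encoderless gradient-based schemes such as VAD, which require running many gradient iterations per datapoint to approximate a posterior, this gives the improvement asserted in the lemma.

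The main obstacle I anticipate is not in the algebra — that is essentially just iterating Theorem 1 — but in being precise about what ``complexity'' means, since strictly speaking each layer step costs $\Theta(d_i d_{i-1})$ flops for the matrix-vector product. I would therefore explicitly state the claim as linearity in the number of layers $K$ (with per-layer cost treated as $O(1)$ in $K$), and briefly flag that this relies on (i) caching the pseudoinverses after training and (ii) pointwise invertibility of $a$, so that no inner optimization loop is needed at any layer. With those two qualifications the recursion goes through unambiguously.
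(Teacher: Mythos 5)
Your proposal is correct and follows essentially the same route the paper intends: the lemma is a direct corollary of Theorem 1, obtained by iterating the pseudoinverse-plus-$a^{-1}$ step downward from layer $K$ to layer $1$, with one such step per layer giving the $O(K)$ claim. The paper gives no separate proof beyond the lemma statement itself, and your added precision about caching the pseudoinverses and about per-layer flop counts only sharpens the same argument.
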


\begin{lemma}
    If Equation \ref{eq:solvez} is over-determined, then a norm of the residuals of each equation (i.e. datapoint when solving for $h^{(\cdot)}$) can be used as a measure for confidence, detecting outliers, or mode-collapse. An in-distribution datapoint with high residual norm would signal inference suboptimality given the layer parameters. Similarly, a datapoint may be judged to belong to a certain distribution using a probability field defined over such a measure. 
\end{lemma}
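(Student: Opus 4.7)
The plan is to reduce the statement to a geometric fact about the over-determined least-squares solution from Theorem 1 and then layer a goodness-of-fit argument on top. First I would observe that when $d_{i-1} < d_i$ the column space $\mathrm{col}(W_i)$ is a proper $d_{i-1}$-dimensional subspace of $\mathbb{R}^{d_i}$, so the least-squared solution $\hat{h}^{*,(n)}_{i-1}$ produces a residual
\begin{equation}
r_i^{(n)} := \hat{h}_i^{(n)} - W_i \hat{h}^{*,(n)}_{i-1}
\end{equation}
that lies in the $(d_i - d_{i-1})$-dimensional orthogonal complement of $\mathrm{col}(W_i)$. Its norm is precisely the distance from $\hat{h}_i^{(n)}$ to the manifold that layer $i$ can exactly reproduce, which makes it a natural scalar summary of layer-level fit.

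Next, I would argue the outlier/mode-collapse interpretation. The idea is that for an in-distribution datapoint on a well-fit model, the true pre-activation lies approximately inside $\mathrm{col}(W_i)$ by construction, so $\|r_i^{(n)}\|$ should concentrate near zero. A large residual therefore flags one of two regimes: either $\hat{h}_i^{(n)}$ cannot be expressed by the current layer (an outlier, or a mode the learned weights failed to cover), or the inversions at layers strictly above $i$ have already drifted off the manifold. Comparing $\|r_i^{(n)}\|$ across $i = 1, \dots, K$ localizes the offending layer, and I expect this per-layer decomposition to be the main interpretive tool.

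For the probability-field claim I would calibrate the distribution of $\|r_i^{(n)}\|^2$ under the null hypothesis ``in-distribution with optimal inference'' using a held-out calibration set. Under a standard Gaussian assumption on the orthogonal complement the quantity is approximately chi-squared with $d_i - d_{i-1}$ degrees of freedom; more generally one can fit an empirical or kernel-density model. The resulting tail probability $\Pr\bigl(\|R\|^2 \geq \|r_i^{(n)}\|^2\bigr)$ is the promised probability field over the input space, and the confidence/outlier judgments follow by thresholding this p-value.

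The main obstacle is that the lemma is partly interpretive: terms such as ``confidence,'' ``mode-collapse,'' and ``probability field'' are not introduced as formal objects in the model section, so a fully rigorous treatment would require additional distributional assumptions beyond the linear-algebraic setup. The strictly deductive content is the orthogonal decomposition $\hat{h}_i^{(n)} = W_i \hat{h}^{*,(n)}_{i-1} + r_i^{(n)}$ with $r_i^{(n)} \perp \mathrm{col}(W_i)$, inherited directly from Theorem 1; the remaining statements I would present modulo a mild noise assumption, letting the geometric identity do the heavy lifting and attaching the probabilistic interpretation as a standard goodness-of-fit consequence.
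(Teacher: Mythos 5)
Your proposal is correct, and in fact it supplies more than the paper does: the paper states this lemma without any accompanying proof, treating the residual-as-diagnostic claim as an immediate consequence of the over-determined least-squares setup in Theorem~1. The deductive core you identify --- that for full-column-rank $W_i$ the least-squares solution $\hat{h}^{*,(n)}_{i-1}=(W_i^T W_i)^{-1}W_i^T\hat{h}_i^{(n)}$ leaves a residual $r_i^{(n)}$ orthogonal to $\mathrm{col}(W_i)$, whose norm is the distance from the pre-activation to the set of responses the layer can exactly produce --- is exactly the implicit content of the paper's claim, and your per-layer localization of large residuals matches how the paper later uses these quantities informally (e.g., to discuss fit and mode-collapse in Section~5). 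Your calibration of a tail probability for $\|r_i^{(n)}\|^2$ (chi-squared with $d_i-d_{i-1}$ degrees of freedom under a Gaussian null, or an empirical density otherwise) is a reasonable formalization of the paper's vague phrase ``probability field,'' which the paper never defines; you are right that this part cannot be proved from the linear-algebraic setup alone and requires a distributional assumption. The only caveat worth stating explicitly is that the dimension count $d_i-d_{i-1}$ for the orthogonal complement presupposes $\mathrm{rank}(W_i)=d_{i-1}$, which the paper does assume (it requires initialization avoiding rank deficiency) but which you should carry as a hypothesis. In short: the lemma is interpretive, the paper offers no argument, and your reconstruction is a sound and appropriately hedged one.
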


\begin{lemma}
    A layer which has $d_i=d_{i-1}$ is not a useful neural construct for StarNet. Such a layer will collapse an affine transformation to a degenerate identity projection. If a solution lies within the $R^{d_{i-1}}$, then the Equation \ref{eq:pseudo} will find it, regardless of the depth of similarly shaped layers projecting into that same space. 
\end{lemma}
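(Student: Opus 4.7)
The plan is to unpack the pseudoinverse formula of Equation~\ref{eq:pseudo} in the square case and then propagate the consequence across layers. Under the full-rank initialization already assumed in the proof of the preceding theorem, a square $W_i \in \mathbb{R}^{d_i \times d_{i-1}}$ with $d_i=d_{i-1}$ is invertible, so
\begin{equation*}
(W_i^T W_i)^{-1} W_i^T \;=\; W_i^{-1}(W_i^T)^{-1}W_i^T \;=\; W_i^{-1}.
\end{equation*}
This degenerates the ``least-squares'' solution into an exact inverse $\hat{h}_{i-1}^{*}=W_i^{-1}\hat{h}_i$, forces the per-datapoint residual $\hat{h}_i - W_i \hat{h}_{i-1}^{*}$ to vanish identically, and reduces the column-space projector $W_i(W_i^T W_i)^{-1}W_i^T$ to $I_{d_i}$ --- exactly the ``degenerate identity projection'' named in the statement.

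Next I would iterate this observation across a stack of $k$ same-shape layers. Composing the inversion with the invertible activation at each step, the recovered latent $h_{i-k}^{*}$ becomes a deterministic invertible self-map of $\mathbb{R}^{d_{i-1}}$ applied to $h_i$, namely
\begin{equation*}
h_{i-k}^{*} \;=\; \bigl(W_{i-k+1}^{-1}\circ a^{-1}\circ W_{i-k+2}^{-1}\circ a^{-1}\circ\cdots\circ W_i^{-1}\circ a^{-1}\bigr)(h_i).
\end{equation*}
Because each factor is an invertible self-map of $\mathbb{R}^{d_{i-1}}$, so is the whole composition, and it can be absorbed into a single square layer by a redefinition of the weight matrix. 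Hence any target in $\mathbb{R}^{d_{i-1}}$ that the deep chain can fit is already fit, with identically zero residual, by the single top layer; additional same-shape layers therefore add no degrees of freedom to the solved system.

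Finally, as a corollary I would note that the residual-based diagnostics introduced in the previous lemma become vacuous here: since the solution is exact, the confidence, outlier, and mode-collapse scores derived from residual norms all collapse to zero. The main obstacle I anticipate is rhetorical rather than technical --- ``not a useful neural construct'' has to be made precise. I would formalize it by three facts, each aligned with one clause of the lemma: (a) the column-space projector $W_i(W_i^T W_i)^{-1} W_i^T$ equals $I_{d_i}$, (b) the StarNet residual is identically zero on every datapoint, and (c) the set of targets reachable by a stack of same-shape layers coincides with that reachable by a single same-shape layer.
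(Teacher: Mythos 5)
Your argument is correct and matches the reasoning the paper leaves implicit (this lemma is stated with no separate proof): for square full-rank $W_i$ the pseudoinverse $(W_i^T W_i)^{-1}W_i^T$ collapses to $W_i^{-1}$, the column-space projector becomes $I_{d_i}$, the per-datapoint residual vanishes, and bijectivity makes any stack of same-shape layers reach exactly what a single one reaches. One minor caution: because the interleaved $a^{-1}$ is nonlinear, the composed map cannot literally be ``absorbed into a single square layer by a redefinition of the weight matrix,'' but your conclusion only requires that the composition is a bijection of $\mathbb{R}^{d_{i-1}}$ --- which it is --- so the argument stands.
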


\begin{theorem}
Equation \ref{eq:nn} can also be solved w.r.t $W_i$ iff $d_{i-1} \leq N$; essentially, there needs to be more datapoints than the number of input neurons in each individual layer of the network. 

\end{theorem}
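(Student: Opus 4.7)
The plan is to reduce this theorem to exactly the same pseudoinverse setup used in Theorem 1, but with the roles of ``weights'' and ``data'' swapped. First I would stack the per-datapoint quantities for layer $i$ into matrices: let $H_{i-1} \in \mathbb{R}^{d_{i-1} \times N}$ have the inputs $h_{i-1}^{(n)}$ as its columns, and let $\hat{H}_{i} \in \mathbb{R}^{d_{i} \times N}$ have the pre-activations $\hat{h}_{i}^{(n)} = a^{-1}(h_{i}^{(n)})$ as its columns. Then writing Equation~\ref{eq:nn} jointly for all $n$ yields the single matrix equation $W_{i}\, H_{i-1} = \hat{H}_{i}$, whose unknown is $W_{i} \in \mathbb{R}^{d_{i} \times d_{i-1}}$.

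Next I would transpose to expose the exact structure of Equation~\ref{eq:solvez}: $H_{i-1}^{T} W_{i}^{T} = \hat{H}_{i}^{T}$. Now the ``coefficient matrix'' is $H_{i-1}^{T} \in \mathbb{R}^{N \times d_{i-1}}$ and the unknown is $W_{i}^{T}$. This system is of the tall/square form handled in Theorem~1 precisely when $d_{i-1} \leq N$, and decouples across the columns of $W_{i}^{T}$ (i.e.\ across the rows of $W_{i}$), so each row of $W_{i}$ can be recovered by an independent least-squares problem of the same shape. Applying the pseudoinverse formula from Theorem~1 gives the closed form
\begin{equation*}
W_{i}^{*} \;=\; \hat{H}_{i}\, H_{i-1}^{T}\,(H_{i-1} H_{i-1}^{T})^{-1},
\end{equation*}
which is the least-squares (hence ``Star'') solution; it is exact when $d_{i-1} = N$ and approximating when $d_{i-1} < N$. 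This establishes the ``if'' direction.

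For the ``only if'' direction I would mirror the rank-deficiency argument from the lemma following Theorem~1. If $d_{i-1} > N$, then $\mathrm{rank}(H_{i-1}) \leq N < d_{i-1}$, so $H_{i-1} H_{i-1}^{T}$ is singular and the nullspace of $H_{i-1}^{T}$ (acting on rows of $W_{i}$) is nontrivial: any candidate fit $W_{i}^{*}$ can be perturbed by a matrix whose rows lie in this nullspace without changing the product $W_{i} H_{i-1}$, so the solution is not unique. I would also remark that the decoupling across rows means the total number of scalar unknowns per row is $d_{i-1}$ and the number of linear constraints per row is $N$, making $d_{i-1} \leq N$ the natural well/over-determined threshold.

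The main obstacle I anticipate is the same genericity caveat flagged in Theorem~1: $d_{i-1} \leq N$ is necessary but only ensures the correct \emph{shape}; one additionally needs $H_{i-1}$ to have full row rank $d_{i-1}$ so that $H_{i-1} H_{i-1}^{T}$ is actually invertible. I would argue that because the inputs $h_{i-1}^{(n)}$ are produced by Equation~\ref{eq:pseudo} from i.i.d.\ data $x^{(n)}$ through non-rank-deficient weights, this full-rank condition holds almost surely, with the same orthogonal-initialization remedy available otherwise. A secondary, purely bookkeeping point is to verify that the bias-as-constant-dimension convention carries through: the appended row of ones in $H_{i-1}$ does not jeopardize the rank condition as long as the remaining rows are not affine-dependent on it.
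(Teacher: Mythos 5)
Your proposal takes essentially the same route as the paper's proof: transpose the batched layer equation so that $H_{i-1}^{T}$ becomes an $N \times d_{i-1}$ coefficient matrix, decouple across the rows of $W_i$, and solve each resulting system of $N$ equations in $d_{i-1}$ unknowns by pseudoinverse. Your additions (the explicit closed form, the only-if rank-deficiency argument, and the full-row-rank caveat on $H_{i-1}$) go beyond what the paper writes down but are consistent with and strengthen the same argument.
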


\begin{proof}
Let $W_{i,j}$ be the $j$th row (that produces the $j$th dimension in the ouput of the layer) of the weights of the $i$th layer. Let $H_{i-1}$ be the matrix of inputs (over the entire dataset) to the $i$th layer. Thus equation \ref{eq:nn} can be written as:

\begin{equation}
    a^{-1}H_{i,j}^T= H_{i-1}^T \ \cdot \ W_{i,j}^T
\end{equation}

Hence a new system of linear equations with exactly $N$ equations and $d_{i-1}$ unknowns. The same equation can be written for each row of $W_i$ individually, leading to a scalable solution for $W_i$, across as many nodes as the total rows for $W_i$. A solution to this system can similarly be done by calculating the pseudoinverse of the knowns matrix, or elimination approaches.

\end{proof}

\begin{lemma}
    Since it is required to have the network follow an inverse funnel (See Proof \ref{proof:solvez}), then $d_{K-1} < N$ is the only condition that needs to be checked aside the previous conditions. 
\end{lemma}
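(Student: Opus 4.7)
The plan is to combine the two preceding theorems and observe that the chain of dimension constraints collapses to a single check. By Theorem 1, any trainable StarNet decoder must satisfy $d_{i-1}\leq d_i$ for every layer $i\in\{1,\dots,K\}$ (the inverse-funnel condition). Applied repeatedly, this yields the monotone chain
\begin{equation*}
    d_0 \leq d_1 \leq d_2 \leq \cdots \leq d_{K-1} \leq d_K,
\end{equation*}
so $d_{K-1}=\max_{i\in\{1,\dots,K\}} d_{i-1}$ (the last input dimension dominates all earlier input dimensions).

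Next I would invoke Theorem 2, which requires $d_{i-1}\leq N$ at every layer $i$ in order for the system \eqref{eq:nn} to be solvable w.r.t.\ $W_i$. The set of conditions $\{d_{i-1}\leq N\}_{i=1}^K$ is clearly equivalent, by the chain above, to the single condition on the maximum element, namely $d_{K-1}\leq N$ (written as $d_{K-1}<N$ in the statement, which additionally guarantees strict over-determination so that the per-row pseudo-inverse system is non-degenerate rather than merely well-determined). Hence, once the inverse-funnel assumption from Proof~\ref{proof:solvez} is already in force, all layerwise dataset-size requirements reduce to checking the single inequality $d_{K-1}<N$.

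The argument is essentially transitivity of $\leq$, so there is no real obstacle; the only point worth emphasizing is that monotonicity must indeed hold across the entire depth, not just locally, which follows from the fact that Theorem 1 was imposed at every layer rather than at a single one. No auxiliary lemma is needed, and no further case analysis on the activation or on rank-deficiency is required beyond what Theorem 1 and Theorem 2 already guarantee.
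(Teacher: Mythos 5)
Your proof is correct and follows exactly the reasoning the paper intends (the paper states this lemma without a separate proof, but the argument is precisely the transitivity one you give): the inverse-funnel chain makes $d_{K-1}$ the largest input dimension, so the per-layer requirement $d_{i-1}\leq N$ from Theorem~2 collapses to the single check at the last layer. Your side remark about the strict inequality $d_{K-1}<N$ versus Theorem~2's $d_{i-1}\leq N$ is a reasonable reading of the paper's (slightly inconsistent) statement and does not affect the argument.
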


\subsection{Convolutional StarNet}
Convolution is essentially a local affine map applied along patches of an input. Similar to feedforward StarNet, a system of linear equations can be solved for convolutional decoders. In order to keep the system of linear equations determined for the latents, we also discuss combining the convolution and pooling operations together for a conv-unpool layer. A conv-unpool layer swaps generates neighboring pixels of a large response map as channels within a smaller response map. Figure \ref{fig:convun} shows the operations of a conv-unpool layer. 

\begin{figure*}[t]
\centering{
\includegraphics[width=0.5\linewidth]{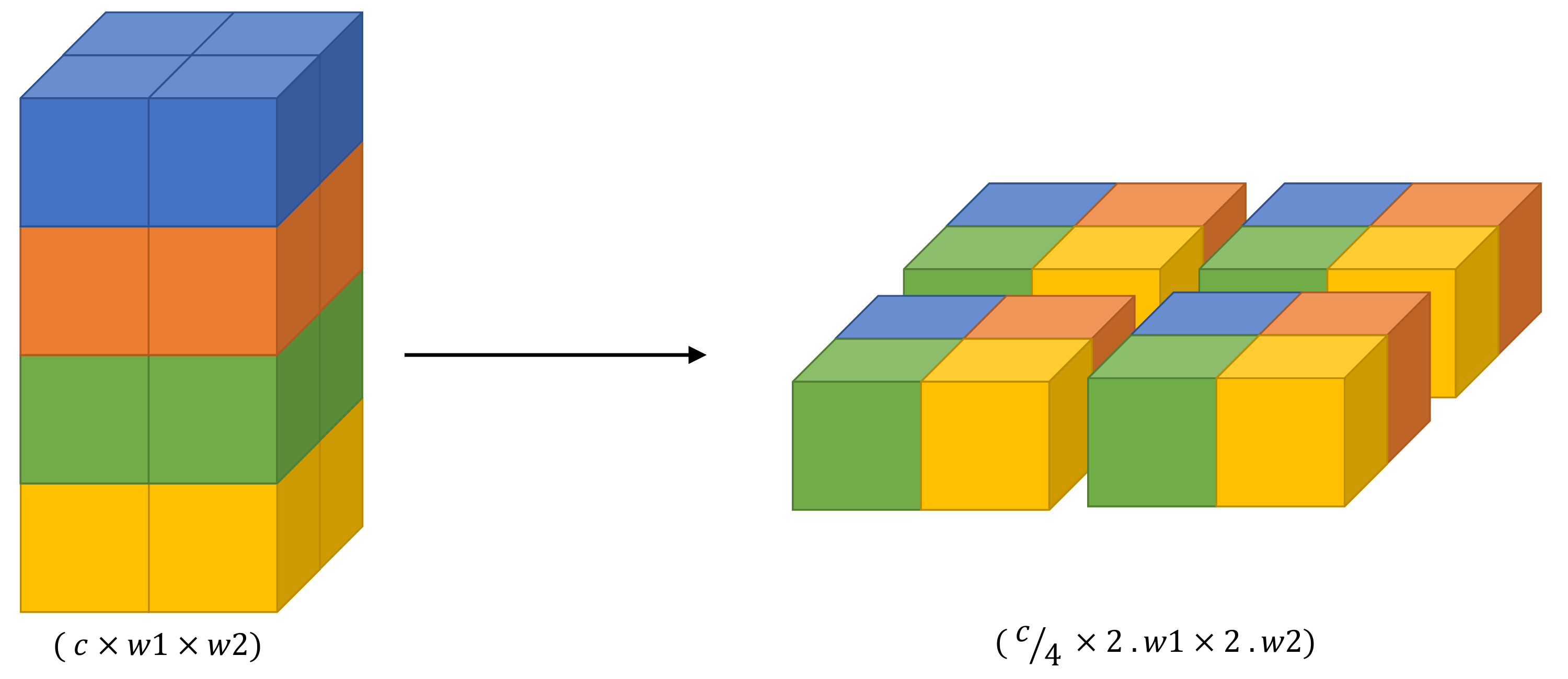}
\caption{\label{fig:convun}The operations of a conv-unpool layer with unpooling of $2 \times 2$.  Essentially a local patch (right) is created by reshaping the responses of the previous responses (left).}}
\end{figure*}

\begin{theorem}
    The system of linear equations for a conv-unpool layer with a stride of $1$ remains determined w.r.t the latents of the $(i-1)$th layer, as long as $c_i \leq c_{i-1} \cdot u_{i-1}^2$; with $c_i$, $c_{i-1}$ being the number of channels in the $i$th and $(i-1)$th layer of conv-unpool operations, and $u_{i}$ the unpooling shape in both image directions (here assumed to be square shaped, but can also be rectangle). 
\end{theorem}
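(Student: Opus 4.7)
My plan is to reduce the conv-unpool operation to the same pseudoinverse framework used in Theorem 1, after unfolding the tensor structure of the convolutional feature maps into an explicit matrix form. First I would write out the conv-unpool explicitly: a stride-$1$ convolution from $h_{i-1}$ with $c_{i-1}$ input channels produces an intermediate map whose channels are then rearranged so that a block of $u_{i-1}^2$ intermediate channels becomes a $u_{i-1}\times u_{i-1}$ spatial patch of $c_i$ channels in $h_i$. Since this reshaping is a bijective relabeling of scalar entries (a permutation), the nontrivial analysis concerns only the convolution itself.

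Next I would localize the system to a single spatial position to obtain a per-pixel linear system. A $1\times 1$ conv fully decouples across pixels, so at each location there is a linear map $W$ from the $c_{i-1}$ channels of $h_{i-1}$ to the $c_i\cdot u_{i-1}^2$ entries of the corresponding unpooled patch in $h_i$. After applying $a^{-1}$ entrywise and inverting the unpool permutation to recover the pre-activated intermediate vector, this is exactly the setting of Equation \ref{eq:solvez}, so I can invoke Theorem 1 with the substitutions $d_i \leftarrow c_i\, u_{i-1}^2$ and $d_{i-1}\leftarrow c_{i-1}$. The stated channel inequality then becomes precisely the well-/over-determined condition, and the pseudoinverse $(W^\top W)^{-1}W^\top$ yields a least-squared recovery of the per-pixel latent.

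For kernels larger than $1\times 1$ the strict decoupling breaks, because adjacent output patches share neighboring inputs. I would handle this by assembling the im2col-style block matrix over all spatial locations; because the stride is $1$ the common spatial factor $s_{i-1}^2$ cancels from both the equation count and the unknown count, leaving the same channel-only inequality as in the $1\times 1$ case. Concatenating the rows produces a structured block-banded system to which the same pseudoinverse argument applies uniformly, layer by layer.

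The main obstacle I anticipate is bridging this per-pixel dimension count to a full-rank claim on the globally assembled matrix. Dimension counting alone rules out underdeterminedness but does not preclude a rank deficiency introduced by the repeated filter structure. As in the remark following Theorem 1, I would argue that generic (random or orthogonally initialized) conv weights preserve full column rank of the stacked block matrix almost surely, so the global system inherits the determinedness of its per-pixel blocks. A secondary subtlety is that the unpool step imposes a fixed correspondence between intermediate channels and output pixel offsets, but because this correspondence is a permutation it affects neither the rank nor the equation count and therefore does not alter the conclusion.
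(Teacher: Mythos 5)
The paper states this theorem with no proof at all (it is followed immediately by Lemma \ref{lemma:border}), so there is nothing of the authors' to compare your argument against. Your overall strategy --- unfold the conv-unpool into an explicit linear map, treat the depth-to-space rearrangement as a permutation that affects neither rank nor equation count, count equations against unknowns, and reuse the pseudoinverse machinery of Theorem 1 --- is the natural reading of what the authors intend, and is the right skeleton for a proof.

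There is, however, a concrete gap at the central step. Under your own substitution $d_i \leftarrow c_i\, u_{i-1}^2$ and $d_{i-1} \leftarrow c_{i-1}$, the well-/over-determined condition of Theorem 1 reads $c_{i-1} \leq c_i \cdot u_{i-1}^2$, which is \emph{not} the stated inequality $c_i \leq c_{i-1} \cdot u_{i-1}^2$; these are genuinely different conditions (take $c_{i-1}=8$, $c_i=1$, $u_{i-1}=2$: the theorem's inequality holds, yours fails). You assert that the stated channel inequality ``becomes precisely'' the determinedness condition, but it does not, so as written you have proved a different statement from the one in the theorem. Your count ($c_{i-1}s^2$ unknowns in $h_{i-1}$ against $c_i u_{i-1}^2 s^2$ known entries of $h_i$) is in fact the correct one for the conv-then-depth-to-space operation depicted in Figure \ref{fig:convun}, which strongly suggests the theorem's channel subscripts are transposed; either way, you must flag and resolve the discrepancy rather than declare the two inequalities identical. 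Two smaller issues: the genericity step (random or orthogonally initialized kernels give the globally stacked, weight-tied block-Toeplitz matrix full column rank ``almost surely'') is asserted rather than argued --- you need at minimum that the full-rank locus is nonempty, which is not automatic for such structured matrices --- and the claim that the spatial factor ``cancels'' for $k>1$ kernels fails precisely at the image borders, which is the content of Lemma \ref{lemma:border} and the reason the authors insist on full convolutions.
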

\begin{lemma}\label{lemma:border}
    The above condition can be restrictive for the border convolutions. Performing full convolutions will help with the solution for all the inputs to remain determined (even the very borders of the input response map). 
\end{lemma}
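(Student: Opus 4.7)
The plan is to argue via a counting argument on how many equations each input pixel participates in, and then show that full convolutions equalize that count across the input map. First, I would characterize, under stride-$1$ valid convolution with a $k \times k$ kernel, how many scalar equations involve a given input value $h_{i-1}(x,y,c)$. An interior pixel lies within exactly $k^2$ kernel positions and hence contributes to $c_i \cdot k^2$ equations of the layer's linear system. A pixel at horizontal offset $j < k-1$ from the boundary lies within only $(j+1)k$ kernel positions (and similarly for the other three sides), so its contribution is strictly smaller. The counting is exactly the one implicit in the preceding Theorem, now refined per-pixel.

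Next I would combine this count with the determinedness bound $c_i \leq c_{i-1} \cdot u_{i-1}^2$ from the Theorem. That bound is derived under the assumption that every input pixel gets the interior count of equations; when a subset of pixels (the borders) is systematically under-covered, the corresponding sub-block of the coefficient matrix has strictly fewer rows than columns, and so the global system, although determined on the interior, becomes under-determined on the boundary strip of width $k-1$. This isolates exactly where the problem arises and why the constraint of the previous Theorem alone is not enough.

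Then I would introduce full convolution as the fix: pad the input map with $k-1$ constant (zero) borders on each side and slide the kernel over the padded map. Every original input pixel is now covered by exactly $k^2$ kernel positions, so the per-pixel equation count becomes uniform and equal to the interior count. The padded entries contribute known constants (not unknowns), so they add equations without adding variables, and in particular the column count of the border sub-block becomes at most its row count. Applying the same rank argument as in Theorem 3, but now uniformly over the whole map, yields a determined system including the borders, which is what the lemma claims. Together with Lemma~\ref{lemma:border}'s statement, this also shows that the extra cost of full convolution is purely at the boundary and does not alter the interior analysis.

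The hard part will be the boundary bookkeeping. Two points need care: (i) verifying that the zero-padded equations at the borders are not merely trivial copies but are in fact rank-contributing (since rows with many zero coefficients can still drop rank if they align poorly with adjacent rows); and (ii) checking that the conv-unpool reshaping — which reinterprets spatial neighbors as channel indices — remains well-defined on padded patches, so that the border equations have the same block structure as interior ones and the Theorem's pseudo-inverse construction applies unchanged. Once these two technicalities are resolved, the uniform counting argument closes the lemma directly.
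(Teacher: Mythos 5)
The paper states this lemma without any proof, so there is nothing to compare against directly; your counting argument is the natural way to substantiate it, and the overall approach is sound. Each unknown entry of $h_{i-1}$ is constrained only by the output positions whose receptive field covers it, and under valid stride-$1$ convolution that coverage drops from $k^2$ kernel positions in the interior to as few as $1$ at a corner, while full convolution (padding by $k-1$, with padded entries treated as known constants rather than unknowns) restores uniform coverage. That is exactly the content of the lemma, and your observation that the padding adds equations without adding variables is the key point.

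One step needs tightening. You write that the border "sub-block of the coefficient matrix has strictly fewer rows than columns," but the border unknowns do not form an isolated block: every equation that touches a border pixel also has nonzero coefficients on interior pixels, so you cannot read off under-determinedness from a row/column count of that sub-block alone. The clean version of your argument is either global --- under valid convolution the output spatial extent is $(H-k+1)^2$ against $H^2$ input positions, so the total equation count can fall below the total unknown count even when $c_i \leq c_{i-1}u_{i-1}^2$ holds, and full convolution inflates the output extent to $(H+k-1)^2$, restoring the inequality --- or a null-space argument exhibiting input perturbations supported on the boundary strip that the valid-convolution operator annihilates. Your per-pixel count then serves to localize the deficit to the border, which is what the lemma asserts. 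The two technicalities you flag (generic rank of the border rows, and the conv-unpool reshaping on padded patches) are genuine, but the paper itself resolves the analogous interior question only by assuming the weights are not rank-deficient at initialization, so handling the border under the same genericity assumption puts your proof at the same level of rigor as the rest of the paper.
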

\begin{theorem}
    The same system of linear equations can be solved w.r.t the convolutional parameters as long as there are fewer convolutional parameters in a single convolutional kernel (e.g. $49$ for a $7 \times 7$ convolutions) than the total number of convolutions for the same kernel on the entire dataset. This is somewhat a relaxed condition since layers tend to be larger than the shape of a single convolutional kernel. 
\end{theorem}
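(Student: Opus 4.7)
The plan is to mirror the proof style of the earlier feedforward result (Theorem 2 on solving for $W_i$), but applied to the local affine maps that constitute a convolution. Since a convolution is nothing more than a shared affine map slid across spatial positions, each application of the kernel at a given spatial location on a given datapoint gives rise to one linear equation in the kernel parameters. I would first unfold the convolution at the $i$-th layer, using the same inverse-activation trick from Equation \ref{eq:pseudo} to recover the pre-activation responses $\hat{h}_i$ from the (already inferred by Theorem 1) latents $h_i$, so that on both sides we have the linear part of the layer exposed.

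Next I would pick an arbitrary output channel $c$ of the $i$-th conv-unpool layer and isolate its kernel $k_{i,c} \in \mathbb{R}^{p}$, where $p = c_{i-1}\, f^2$ for an $f \times f$ kernel (and analogously enlarged by $u^2$ for the conv-unpool variant of Figure \ref{fig:convun}). For every datapoint $n$ and every valid spatial position $s$ on which the kernel is applied, one forms the flattened input patch $\phi_{s}^{(n)} \in \mathbb{R}^{p}$ drawn from $h_{i-1}^{(n)}$, and writes the single scalar equation $\phi_s^{(n)\,T} k_{i,c} = \hat{h}_{i,c,s}^{(n)}$. Stacking these over all spatial positions $s = 1,\dots,S$ and all datapoints $n = 1,\dots,N$ yields a single linear system
\begin{equation}
    \Phi \, k_{i,c} = \hat{y}_{i,c},
\end{equation}
where $\Phi \in \mathbb{R}^{SN \times p}$ is the patch-design matrix and $\hat{y}_{i,c} \in \mathbb{R}^{SN}$ collects the corresponding pre-activation targets. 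The system is (well- or over-)determined exactly when $p \leq SN$, which is the stated condition: the number of kernel parameters is at most the total number of times the kernel is applied over the dataset. The pseudoinverse solution $k_{i,c}^{*} = (\Phi^{T}\Phi)^{-1}\Phi^{T} \hat{y}_{i,c}$ then plays the same role here as in Theorem 2, and the $c_i$ output channels can be solved independently and in parallel, just as the rows of $W_i$ were.

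The step I expect to be the real obstacle is not the counting argument but guaranteeing that $\Phi$ has full column rank $p$. Unlike the feedforward case, the rows of $\Phi$ are not independent samples from the data distribution; they are overlapping patches from the same few input maps, so they share pixels and are highly correlated. To close this gap rigorously I would argue that, provided the input maps at layer $i-1$ are not degenerate (e.g., locally constant over entire spatial extents, which is already ruled out once any earlier layer with random initialization has been inverted via Theorem 1), the collection of all patches across the dataset spans $\mathbb{R}^{p}$ with probability one, exactly as random weight initialization ensures non-rank-deficiency in the feedforward case. The boundary artifacts flagged in Lemma \ref{lemma:border} would be handled identically here by adopting full convolutions, so that every input location participates as the center of at least one patch and contributes a distinct row to $\Phi$. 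Once full column rank of $\Phi$ is established, the remainder of the proof is essentially a verbatim repetition of Theorem 2, which is why the paper can afford to state the result without grinding through the algebra again.
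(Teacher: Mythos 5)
Your proposal is correct and follows exactly the route the paper intends: unroll the convolution into one scalar linear equation per kernel application per datapoint, stack into an over-determined patch-design system $\Phi\,k_{i,c}=\hat{y}_{i,c}$, and solve each output channel's kernel independently by pseudoinverse, in direct analogy with the row-wise treatment of $W_i$ in the feedforward theorem. Note that the paper actually states this theorem without any proof at all; your counting argument is consistent with the figures it quotes in Lemma \ref{lemma:subw} ($2{,}880{,}000$ equations versus $49$ unknowns for MNIST), and your explicit discussion of why $\Phi$ should have full column rank despite its rows being overlapping, correlated patches addresses a point the paper silently assumes rather than argues.
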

\begin{lemma}\label{lemma:subw}
    When solving for the convolutional parameters, the number of equations is extremely large compared to the number of unknowns (e.g. $2,880,000$ equations for a single $7 \times 7$ convolution in MNIST dataset with $49$ unknowns). This is due to the fact that the same kernels are applied over different patches. Naturally, calculation of the pseudoinverse may become computational exhaustive very quickly as the size of input image increases. We provide two practical solutions for this: 1) We can solve the system via a subset of the datapoints or equations. Naturally, this puts an emphasis on what datapoints are chosen. Hence, the subset should be reflective of the larger system of equations. In practice random datapoint or patch selection across the entire dataset works well, as shown in Figure \ref{fig:subw}, or 2) To get a more precise approximation of the convolutional weights, we can distribute the overdetermined equations across multiple computational nodes. Each computational node will still solve an overdetermined system, but a smaller one than the main system defined over the entire dataset. The final weights are subsequently the average of the solution found by the nodes.  
\end{lemma}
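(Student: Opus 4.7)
The plan is to treat the two proposed practical strategies as two different approximations of the full least-squares solution $W^\star$ of the overdetermined system $H_{i-1}^T W_{i,j}^T = a^{-1}H_{i,j}^T$ established in the preceding theorem, and to control the error of each. To lighten notation, compactify one row of the kernel-solve into $Ax=b$ with $A\in\mathbb{R}^{M\times d}$, $M\gg d$, where each row of $A$ is the vectorised patch that a single convolution operates on and $x$ is the vectorised kernel $W_{i,j}^T$. The exact solution is $x^\star=(A^TA)^{-1}A^Tb$.

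For the subsampling claim, I would appeal to random row-sampling for least squares. Pick a uniform random subset $S\subseteq\{1,\dots,M\}$ of size $m$ with $d\ll m\ll M$, and form $x_S=(A_S^TA_S)^{-1}A_S^Tb_S$. Since $\mathbb{E}_S[\tfrac{M}{m}A_S^TA_S]=A^TA$ and similarly for $A_S^Tb_S$, a matrix Bernstein/Chernoff bound on $A_S^TA_S$, under the mild assumption that patch rows have uniformly bounded norm (which holds for bounded-intensity inputs composed with Lipschitz activations), yields with high probability $\|\tfrac{M}{m}A_S^TA_S-A^TA\|_{\mathrm{op}}\le\varepsilon\|A^TA\|_{\mathrm{op}}$ for $m$ polylogarithmic in $d/\varepsilon$. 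Standard perturbation of the pseudoinverse then converts this into a bound on $\|x_S-x^\star\|$ that tends to zero as $m$ grows. This also matches what Figure \ref{fig:subw} reports empirically: as the random subset size increases, the subset solution converges to the full-dataset solution.

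For the distribute-and-average claim, I would partition the rows into $J$ disjoint blocks $S_1,\dots,S_J$, let $x_j=(A_{S_j}^TA_{S_j})^{-1}A_{S_j}^Tb_{S_j}$, and set $\bar{x}=\tfrac{1}{J}\sum_j x_j$. The argument has two pieces: first, each $x_j$ is itself the solution of an overdetermined subsystem, so the previous concentration result makes each $x_j$ individually close to $x^\star$ and hence bounds $\|\bar{x}-x^\star\|$ by the per-node error; second, a Taylor expansion of the map $A\mapsto(A^TA)^{-1}A^T$ around the full matrix shows $\mathbb{E}[x_j]=x^\star+O(1/|S_j|)$, so averaging $J$ near-independent node solutions shrinks the variance by a factor of $J$ and gives a sharper estimate than any individual node. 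This both justifies the averaging recipe and quantifies how large each node's local subsystem must be before averaging starts to pay off.

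The main obstacle is that the patch rows of $A$ are far from independent: overlapping patches within the same image are heavily correlated, which breaks the standard matrix concentration arguments that assume near-independent rows. I would work around this by grouping all patches of a single image into one atomic row-bundle and sampling over the i.i.d.\ dataset index $n$ rather than the patch index, so concentration is over the truly independent draws $x^{(n)}\sim p(x)$ and the per-image patch correlations are absorbed into the bundle norm. A secondary technical issue is that $A\mapsto(A^TA)^{-1}A^T$ is nonlinear, so the distribute-and-average estimator is only approximately unbiased; quantifying the residual bias via the first-order pseudoinverse expansion and showing it is dominated by the $1/J$ variance term for realistic block sizes is where the bulk of the careful work sits.
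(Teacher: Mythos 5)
The paper does not actually prove this lemma: it is presented as a practical observation, and the only support offered is empirical --- Figure \ref{fig:subw} and the discussion in Section \ref{sec:results}, which report that roughly $500$ randomly chosen MNIST images per epoch suffice to solve for the weights while $100$ destabilizes the coordinate descent. Your proposal takes a genuinely different, far more formal route: you recast one kernel-row solve as an overdetermined system $Ax=b$ and invoke randomized numerical linear algebra --- matrix concentration for the subsampled Gram matrix $\tfrac{M}{m}A_S^TA_S$, pseudoinverse perturbation to transfer that to $\|x_S-x^\star\|$, and a bias--variance decomposition of the distribute-and-average estimator in the style of divide-and-conquer least squares. This buys actual guarantees (a quantitative sample size $m$, and a $1/J$ variance reduction with an $O(1/|S_j|)$ per-node bias) where the paper only has a plot, and your device of bundling all patches of one image into an atomic row-block so that concentration runs over the i.i.d.\ dataset index is exactly the right fix for the overlapping-patch correlations the paper never mentions. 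Two caveats before you treat the sketch as complete: uniform row (or image) sampling with only a bounded-row-norm assumption does not by itself give a relative-error Gram bound with $m$ polylogarithmic in $d/\varepsilon$ --- you additionally need the leverage scores (coherence) of $A$ to be bounded, otherwise a few informative patches can be missed and the sample complexity degrades toward $M$; and the least-squares perturbation step should carry the residual $b-Ax^\star$ explicitly, since for an inconsistent system the error in $x_S$ scales with that residual and the condition number of $A$, which is precisely the regime the lemma is about. Neither caveat breaks the argument, but both belong in the statement of your assumptions.
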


\subsection{Training and Inference Algorithm}

Training StarNet follows a coordinate descent framework. We start from the last layer of the network, where the $h^{(n)}_K=x^{(n)}$. We iteratively solve the system of linear equations w.r.t the latents, and subsequently w.r.t the weights of the model, until convergence is reached. The inverse activation is applied after training the layer to reach $h^{*(n)}_{K-1}$. The same approach is repeated for each layer, until we reach the first layer latents: $h^{*(n)}_{1}$. Test-time inference is identical to the layerwise training, except the system of equations in each layer is only solved for the latents, and never solved for the weights (unless the purpose is finetuning). 

Code is provided under \url{https://github.com/A2Zadeh/StarNet}.

\section{Experiments}\label{sec:exp}
In this section we discuss the experiments and the methodology. We first start by outlining the datasets, and subsequently discuss the training procedure and model architectures. 

\subsection{Datasets}

We experiment with the following four datasets.

\textbf{MNIST, Fashion-MNIST:} MNIST~\cite{lecun1998gradient} is a collection of handwritten digits, between $[0,9]$. Fashion-MNIST~\footnote{https://github.com/zalandoresearch/fashion-mnist} is a similar dataset to MNIST, except for fashion items. Both datasets are baselines for image generation. The images are monochromatic, with the shape of $28 \times 28$. 

\textbf{SVHN:} SVHN (Street View House Numbers) is a collection of real-world images of house numbers, obtained from Google Street View images~\cite{netzer2011reading}. One variant of this dataset includes bounding boxes around each digit in a house number, while the other variant (used in this paper) separates all digits individually into $32 \times 32$ images, with digits $[0,9]$ similarly to MNIST.  

\textbf{CIFAR10:} CIFAR10 is a collection of $32\times32$ color images belonging to 10 different classes, corresponding to different animals and vehicles~\cite{krizhevsky2009learning}. The full dataset consists of $6,000$ images per class, $60,000$ images in total.



\begin{figure*}[t]
\centering{
\includegraphics[width=\linewidth]{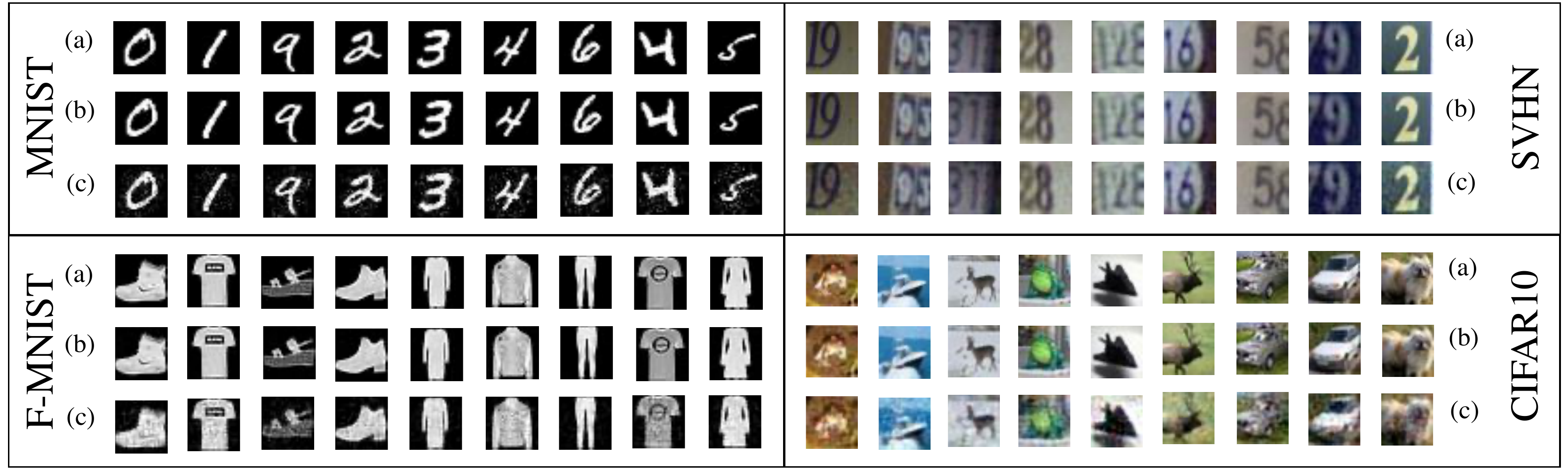}
\caption{\label{fig:allfc}Best viewed zoomed in and in color. Generation results for MNIST, Fashion-MNIST, SVHN and CIFAR10 using feedfoward StarNet with two layers. (a) denotes the target, (b) is the reconstruction after solving for the second layer of 256 neurons, (c) is the reconstruction after solving for the first layer of 128 neurons.}}
\end{figure*}
\begin{figure*}[t]
\centering{
\includegraphics[width=\linewidth]{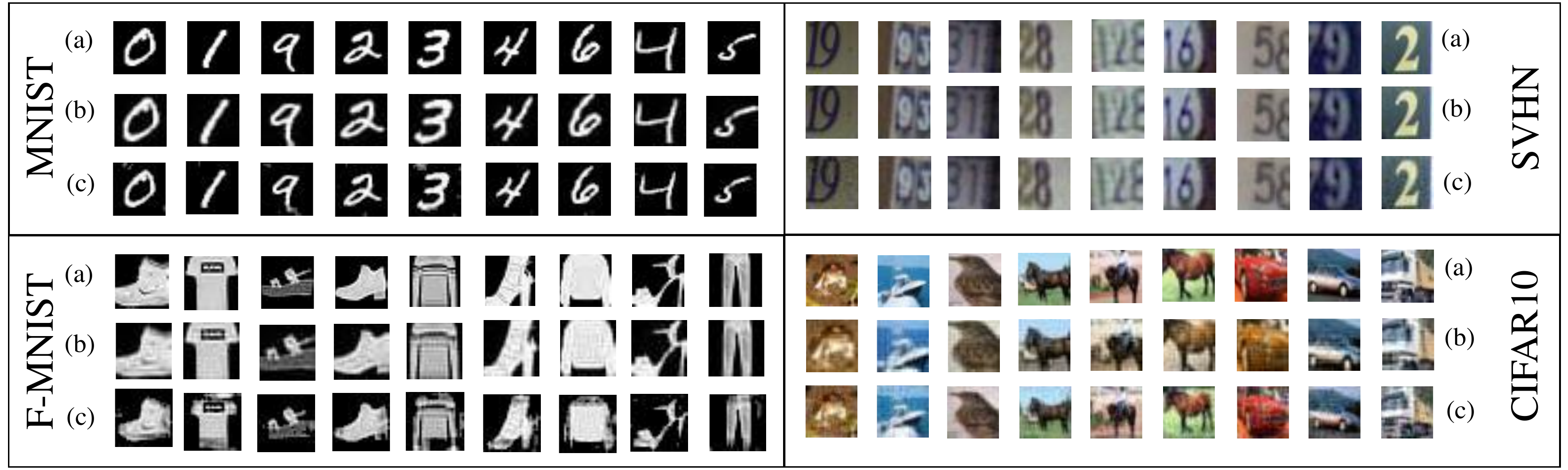}
\caption{\label{fig:allconv}Best viewed zoomed in and in color. Generation results for MNIST, Fashion-MNIST, SVHN and CIFAR10 using convolutional StarNet with two layers. (a) denotes the target, (b) is the reconstruction after solving for the second layer conv-unpool, (c) is the reconstruction after solving for the first layer conv-unpool.}}
\end{figure*}

\subsection{Training Methodology}\label{sec:training}
The training methodology for the feedforwad StarNet and convolutional StarNet are as follows:

\textbf{Feedforward StarNet:} We train feedforward networks with two layers of $128,256$ neurons. The final dimension for the MNIST and Fashion-MNIST is $784$ ($28 \times 28$), while SVHN and CIFAR10 are $3072$ ($3 \times 32 \times 32$). The activation is leaky ReLU, with the negative slope as a hyperparameter discussed in Section \ref{sec:results}. 

\textbf{Convolutional StarNet:} We use a conv-unpool operation, with the pooling of $2$ in each direction. For all trained models, there are two layers of convolution, with the shape $7 \times 7$. For MNIST and Fashion-MNIST, the first layer has $6$ kernels, and the second layer has $2$. For CIFAR10 and SVHN, we have $10$ kernels in the first layer, and $4$ kernels in the second. All trained models use full convolution to ensure that the system remains determined w.r.t the border latents (See Lemma \ref{lemma:border}). 

The dimensions chosen for the above feedforward and convolutional StarNet models follow the conditions required in Section \ref{sec:model}. Therefore, no degenerate identity solution can be learned. 

The comparison Auto-Encoder (AE) baseline uses the same architecture decoder. The encoder is the same architecture as the decoder (no weights shared between the encoder and decoder). Default Adam parameters are used for training with a learning rate of $10e-3$. Batch size is set to $64$.

\section{Results}\label{sec:results}

The generations for the datasets of MNIST, Fashion-MNIST, SVHN and CIFAR10 are shown in Figure \ref{fig:allfc} for feedforward StarNet, and Figure \ref{fig:allconv} for convolutional StarNet. While trained models use no gradients, the generations show high fidelity to ground truth\footnote{The high negative slope of 0.5 was found to work the best overall.}. We show the generations based on each layer, and further discuss the results of the single layer generation in Section \ref{sec:discuss}.

\begin{figure*}[t]
\centering{
\includegraphics[width=\linewidth]{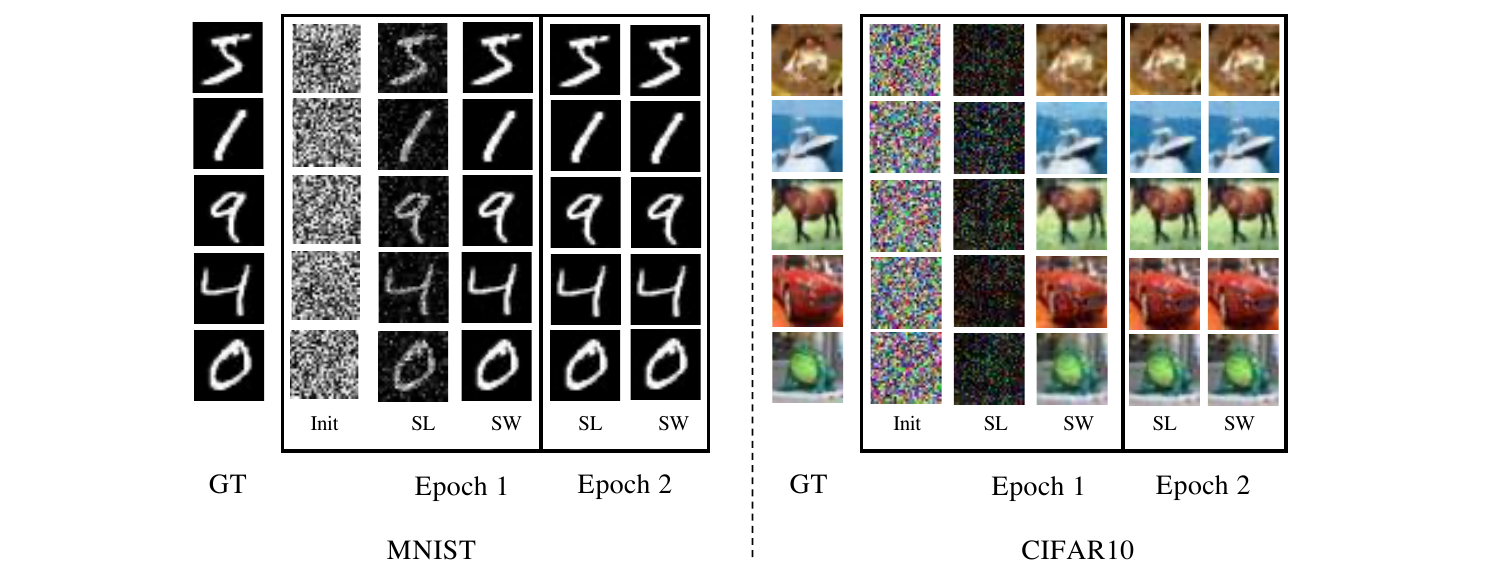}
\caption{\label{fig:progfc}Best viewed zoomed in and in color. Progressive learning of StarNet for feedforward architecture on MNIST and CIFAR10 datasets. Each epoch has two steps, solve w.r.t the latents (denoted as SL) and solve w.r.t the weights (SW). ``Init'' donotes the initial reconstruction given untrained latents and untrained weights. The images start showing high fidelity to the input after the initial epoch.}}
\end{figure*}
\begin{figure*}[t]
\centering{
\includegraphics[width=\linewidth]{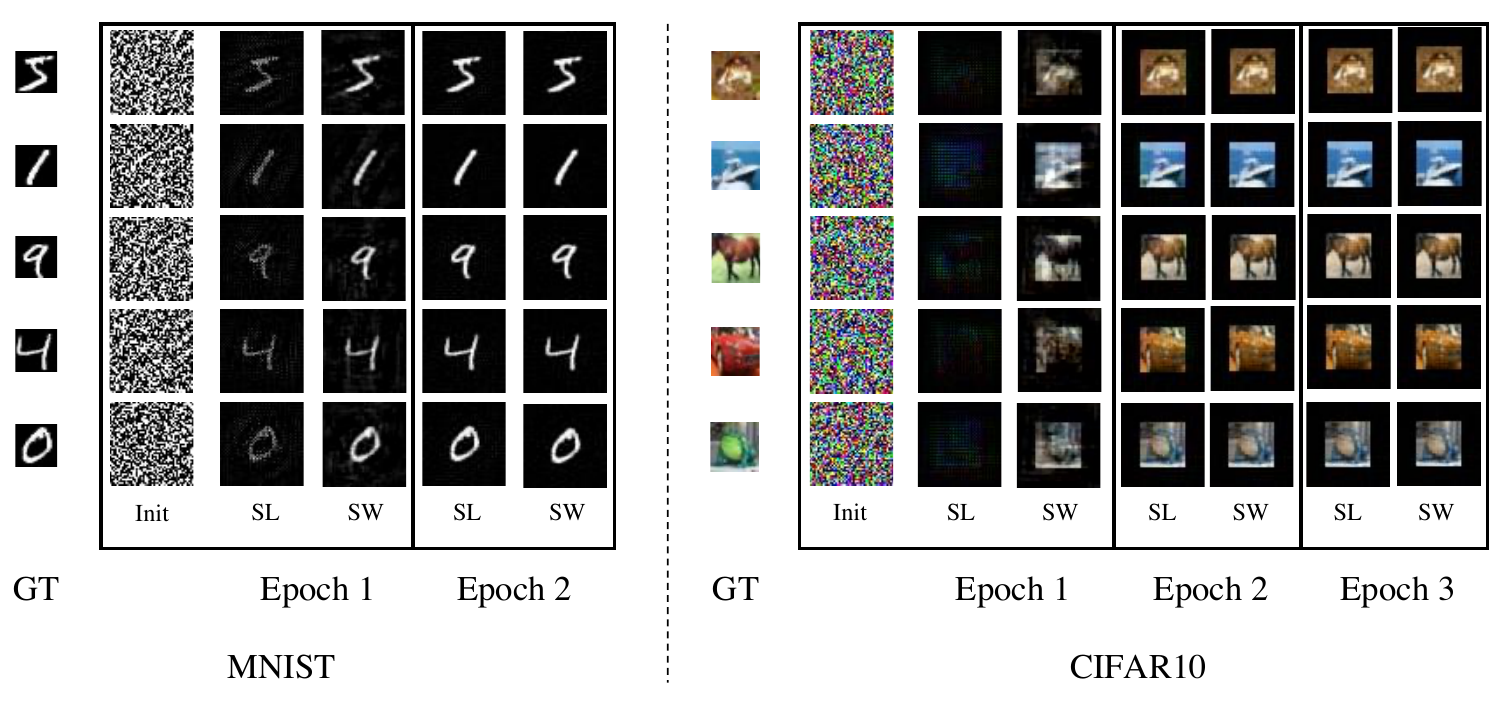}
\caption{\label{fig:progconv}Best viewed zoomed in and in color. Progressive learning of StarNet for convpool architecture on MNIST and CIFAR10 datasets. Each epoch has two steps, solve w.r.t the latents (denoted as SL) and solve w.r.t the weights (SW). ``Init'' donotes the initial reconstruction given untrained latents and untrained weights. The images start showing high fidelity to the input after the initial epoch.}}
\end{figure*}

\section{Discussion}\label{sec:discuss}
We discuss some of the important findings of our experiments in this section. 

\textbf{Performance of a Single Layer Network:} A unique distinction between methods that use free-form parameterization for the latents (i.e. StarNet, VAD), from the methods that use deep layers for inference, is the fact that a single layer network is the most successful in generation. This is due to the fact that the latents are not controlled by an underlying neural structure during the optimization. We posit that a neural architecture imposes a distribution over its range, whereas free-form optimization allows latents to travel everywhere within the latent manifold. Therefore, free-form latents can learn the same output of a deep network (or better ones), if it indeed satisfies the training objective.  

\textbf{Convergence Rate:} On the experimental datasets, StarNet reaches performance plateau mostly within $5$ epochs. Figure \ref{fig:losses} shows the elastic (i.e. $|\cdot| + ||\cdot||$) loss over different epochs for both MNIST and CIFAR10. Note, the loss only reports the error measure and is nowhere used for training the StarNet. For reference, we include the convergence rate of an auto-encoder trained with Adam optimizer with a default learning of $10e-3$. The Auto-Encoder is trained over the elastic objective. 

\textbf{Learning Progress over Different Epochs:} We also study the progress of the generations over different epochs of training. We do this for a single layer feedforward and conv-unpool networks (for MNIST and CIFAR10). The layers are the second layers ($256$ neurons) from Section \ref{sec:training}. The results are shown in Figures \ref{fig:progfc} for feedforward and \ref{fig:progconv} for convolutional. Both architectures show that generations beyond $2$ epochs are already very close to the ground truth. 

\begin{figure*}[t]
\centering{
\includegraphics[width=\linewidth]{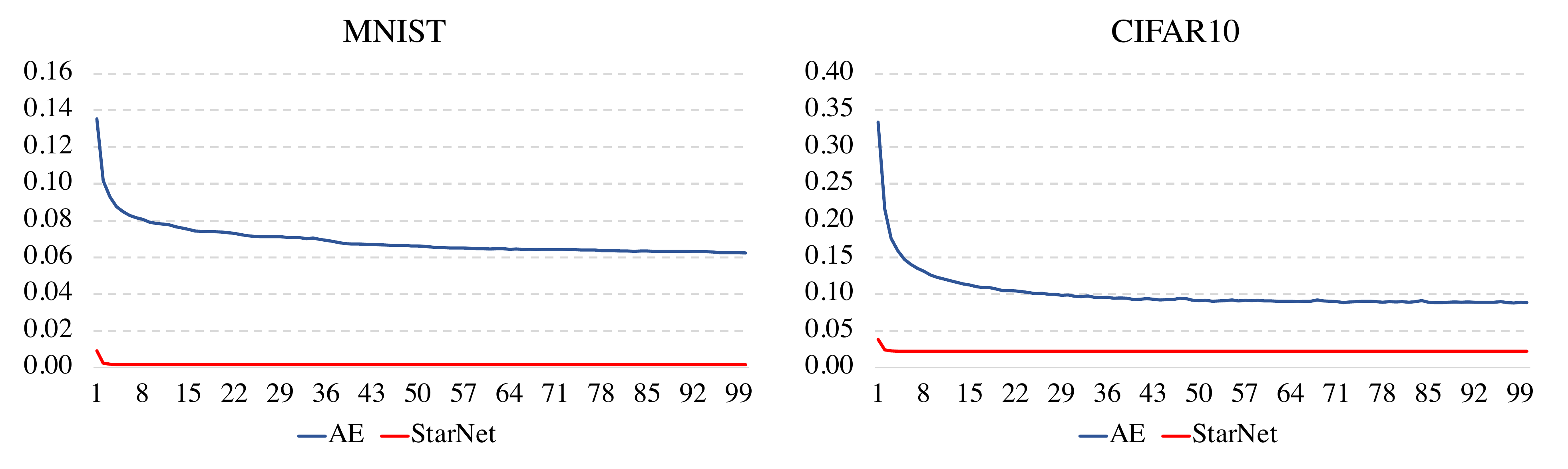}
\caption{\label{fig:losses} Convergence comparison between feedforward StarNet and an Auto-Encoder (AE) over MNIST and CIFAR10. Both decoders are feedfoward architectures with two layers of $128,256$ (AE uses inverse for encoder). AE uses default Adam parameters with learning rate of $10e-3$. Convergence of StarNet is reached within $5$ epochs. }}
\end{figure*}

\textbf{Importance of Both Steps in Learning:} The learning progress in Figures \ref{fig:progfc} and \ref{fig:progconv} demonstrate that while convergence is fast, initial solution (i.e. SL in the first epoch) is not completely successful due to random weights. We also initiate the training w.r.t the weights, and show the results for the MNIST dataset in Figure \ref{fig:wfirst}. Therefore, one cannot train a StarNet by only solving w.r.t either the latents or the weights. Both have to be present. 

\begin{figure*}[t]
\centering{
\includegraphics[width=0.5\linewidth]{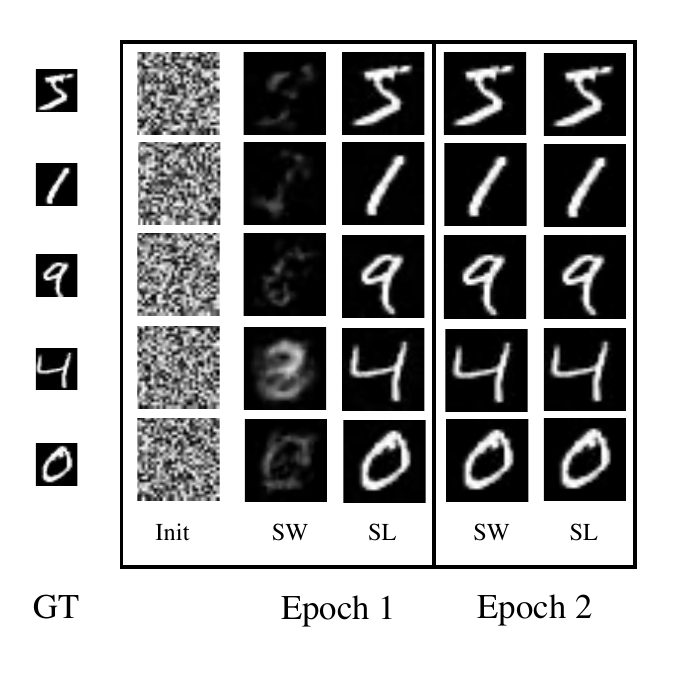}
\caption{\label{fig:wfirst}Best viewed zoomed in. MNIST training for feedforward StarNet. In this case, training starts with solving w.r.t the weights first.}}
\end{figure*}

\textbf{Sampling for Convolutional StarNet Equations:} The system of linear equations in a convolutional StarNet can have too many equations when solving for the parameters of the network. Naturally, this makes calculation of the pseudoinverse (or any other approach for solving system of equations) computationally exhaustive. To mitigate this, we study the effect of random datatpoint sampling on the learning process of the model parameters. For MNIST, Figure \ref{fig:subw} shows that around $500$ datapoint are enough statistics to solve for the weights. Going below that, to $100$ datapoints per epochs seems to make the convergence too slow and the coordinate descent process of learning StarNet unstable. Still, around $500$ equations is a significant reduction from the total number of equations, i.e. $2,880,000$ for full convolutions over MNIST. Note, all models still use all the datapoints for solving w.r.t the latents since datapoints and latents are assumed i.i.d. 

\begin{figure*}[t]
\centering{
\includegraphics[width=0.8\linewidth]{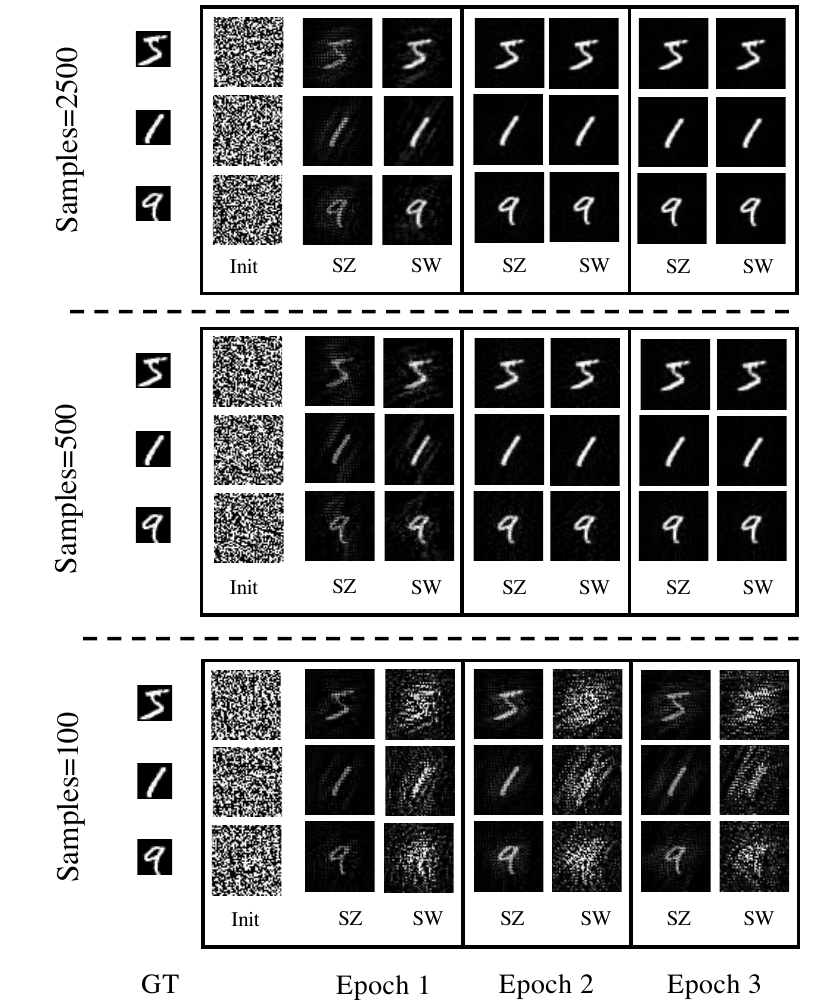}
\caption{\label{fig:subw}Best viewed zoomed in. On the left side, we show the number of samples used for solving the system of linear equations in convolutional StarNet w.r.t the convolution weights. We observe that $500$ images randomly chosen in each epoch are sufficient for training the weights. This sampling serves to mitigate the heavy computations of an otherwise extremely computationally heavy pseudoinverse in convolutional StarNet.}}
\end{figure*}

\textbf{Final Remarks on Scalability:} There are various ways to maximizing computational node utilization when scaling up the learning process of StarNet. First and foremost, when solving both feedforward and convolutional StarNet for the latents, the relation between the number of datapoints and compuational nodes remains linear; i.e. $100$ i.i.d datapoints can go to $100$ computational nodes. Furthermore, for both the feedforward and convolutional cases of StarNet, the solution for the weights can be scaled in two ways: 1) independent weights (rows of weight matrix in feedforward, and kernels in convolution) can be mapped to different computational nodes, 2) in case there are still many more equations than the number of unknowns when solving for the weights (i.e. due to large size of the dataset), a sampling approach can reduce the number of equations to a representative subset, as studied in Figure \ref{fig:subw}. 
\section{Conclusion}

In this paper we presented a new training framework for certain generative architectures. StarNet uses no gradients during training and inference. It relies solely on layer-wise solving of systems of linear equations. This method of training converges fast, and allows for scalability across computational nodes. We present generative experiments over 4 publicly available and well-studied datasets. The generation results show the capabilities of StarNet. 

\bibliographystyle{plain}
\bibliography{citations.bib}

\end{document}